\newtheorem{theorem}{\indent Theorem}
\newtheorem{lemma}{Lemma}
\newtheorem{remark}{Remark}
\begin{document}

\title{Pulling back error to the hidden-node parameter technology: Single-hidden-layer feedforward network without output weight}

\author{Yimin~Yang,
~Q.~M.~Jonathan Wu,~Guangbin~Huang, and~Yaonan~Wang
\thanks{This work was supported by National Natural Science Foundation of China (61175075,61301254,61304007),
and and Hunan Provincial Innovation Foundation For Postgraduate (CX2012B147).
}
\thanks{Y.~M.~Yang is with the College of Electric Engineering, Guangxi University, Nanning 530004, China, and also with the Department of Electrical and Computer Engineering, University of Windsor N9B 3P4, Canada.  }
\thanks{Q.~M.~Jonathan Wu is with the Department of Electrical and Computer Engineering, University of Windsor N9B 3P4, Canada.  }

\thanks{G.~B.~Huang is with the School of Electrical and Electronic Engineering, Nanyang Technological University, Singapore 639798, Singapore. }

\thanks{Y.~N.~Wang are with the College of Electrical and Information Engineering, Hunan University, Changsha 410082, China.  }
}

\markboth{}%
{Shell \MakeLowercase{\textit{et al.}}: Bare Demo of IEEEtran.cls
for Journals}
\maketitle

\begin{abstract}
According to conventional neural network theories, the feature of single-hidden-layer feedforward neural networks(SLFNs) resorts to parameters of the weighted connections and hidden nodes. SLFNs are universal approximators when at least the parameters of the networks including hidden-node parameter and output weight are exist. Unlike above neural network theories, this paper indicates that in order to let SLFNs work as universal approximators, one may simply calculate the hidden node parameter only and the output weight is not needed at all. In other words, this proposed neural network architecture can be considered as a standard SLFNs with fixing output weight equal to an unit vector. Further more, this paper presents experiments which show that the proposed learning method tends to extremely reduce network output error to a very small number with only 1 hidden node. Simulation results demonstrate that the proposed method can provide several to thousands of times faster than other learning algorithm including BP, SVM/SVR and other ELM methods.

\end{abstract}

\begin{IEEEkeywords}
Bidirectional Extreme Learning Machine, Feedforward neural network, universal approximation, number of hidden nodes, learning effectiveness
\end{IEEEkeywords}

\IEEEpeerreviewmaketitle
\section{Introduction}

The widespread popularity of neural networks in many fields is mainly due to their ability to approximate complex nonlinear mappings directly from the input samples. In the past two decades, due to their universal approximation capability, feedforward neural networks (FNNs) have been extensively used in classification and regression problem\cite{1Huang+Chen+Siew2003}. According to Jaeger's estimation\cite{Jaeger2005}, 95\% literatures are mainly on FNNs. As a specific type of FNNs, the
single-hidden-layer feedforward network (SLFNs) plays an important
role in practical applications\cite{Huang2004}. For $N$ arbitrary distinct samples $(\textbf{x}_i,\textbf{t}_i)$, where $\textbf{x}_i=[x_{i1},x_{i2},\cdots,x_{in}]^T\in \textbf{R}^n$ and $\textbf{t}_i\in \textbf{R}^m$, an SLFNs with $L$ hidden nodes and activation function $h(x)$ are mathematically modeled as
\begin{equation}
f_L(\textbf{x})=\sum_{i=1}^L \beta_i h(\textbf{a}_i\cdot \textbf{x}_j+b_i),j=1,\cdots,N
\end{equation}
where $h(\textbf{a}_i,b_i,\textbf{x})$ denotes the output of the $i$th hidden node with the hidden-node parameters $(\textbf{a}_i,b_i)\in \textbf{R}^n\times \textbf{R}$ and $\beta_i\in\textbf{R}$ is the output weight between the $i$th hidden node and the output node. $\textbf{a}_i\cdot \textbf{x}$ denotes the inner product of vector $\textbf{a}_i$ and $\textbf{x}$ in $\textbf{R}^n$.

An active topic on the universal
approximation capability of SLFNs is then how to determine the parameters $\textbf{a}_i,b_i$, and $\beta_i(i=1,\cdots,L)$ such that the network output $f_L(\textbf{x})$ can approximate a given target $\textbf{T}, \textbf{T}=[\textbf{t}_1,\cdots,\textbf{t}_N]$. The feature of SLFNs resorts to parameters of the output weight and hidden nodes parameters. According to conventional neural network theories, SLFNs are universal approximators when all the parameters of the networks including the hidden-node parameters $(\textbf{a},b)$ and output weight $\beta$ are allowed adjustable\cite{1Huang2003}\cite{Zhang2012}.

Unlike above neural network theories that all the parameters in networks are allowed adjustable, other researches proposed some semi-random network theories\cite{Igelnik1995}\cite{PAO1994}\cite{Lowe1988}. For example, Lowe \cite{Lowe1988} focus on a specific RBF network: The centers $a$ in \cite{Lowe1988} can be randomly selected from the training data instead of tuning, but the impact factor $b$ of RBF hidden node is not randomly selected and usually determined by users.

Unlike above semi-random network theories, in 2006, Huang \emph{et al}\cite{1Huang2004} illustrated that iterative techniques are not required in adjusting all the parameters of SLFNs at all. Based on this idea, Huang \emph{et al} proposed simple and efficient learning steps referred to as extreme learning machine(ELM).
In \cite{1Mingbin:2005}\cite{1Huang2005}\cite{3Huang2003}, Huang \emph{et
al} have proved
that SLFNs with randomly
generated hidden node parameter can work as universal approximators by only calculating the output weights linking the hidden layer to the output nodes.  Recently ELM development \cite{Huang2012} shows that ELM unifies FNNs and SVM/LS-SVM. Compared to ELM, LS-SVM and PSVM achieve suboptimal solutions and have a higher computational cost.

Above neural network theories indicate that SLFNs can work as universal approximation at least hidden-node parameters\footnote{hidden-node parameters can be generated randomly} and output weight should be exist, however, in this paper we indicate that output weight do not need exist in SLFNs at all.

In \cite{Yang2013} we proposed a learning algorithm, called bidirectional extreme learning machine (B-ELM) in which half of hidden-node parameters are not randomly selected and are calculated by pulling back the network residual error to input weight. The experimental results in \cite{Yang2013} indicated that B-ELM tends to reduce network output error to a very small value at an extremely early learning stage. Further more, our recent experimental results indicate that in B-ELM\cite{Yang2013}, output weight play a very minion role in the network learning effectiveness. Inspired by these experimental results, in this paper, we show that SLFNs without output weight can approximate any target continuous function and classify any disjoint regions if one using pulling back error to hidden-node parameters. In particular, the following contributions have been made in this paper.

1) The learning speed of proposed learning method can be several to thousands of times faster than other learning methods including SVM, BP and other ELMs. Further more, it can provide good generalization performance and can be applied in regression and classification applications directly.

2) Different from conventional SLFNs in which the hidden node parameter and output weight should be needed, in the proposed method, we proved that SLFNs without output weight can still approximate any target continuous function and classify any disjoint regions. Thus the architecture of this single parameter neural network is extremely simpler than traditional SLFNs.

3)  Different from other neural networks requiring large number of hidden nodes\footnote{In \cite{Huang2012}, Huang \emph{et al} indicate "The generalization performance of ELM is not sensitive to the dimensionality $L$ of the feature space (the number of hidden nodes) as long as $L$ is set large enough (e.g., $L>1000$ for all the real-world cases tested in our simulations)."}, experimental study shows that the proposed learning method with only one hidden node can give significant improvements on accuracy instead of maintaining a large hidden-node-numbers hidden layer.

\section{Preliminaries and Notation}

\subsection{Notations and Definitions}
The sets of real, integer, positive real and positive integer numbers are denoted by $ \textbf{R},\textbf{Z}, \textbf{R}^+$ and $\textbf{Z}^+$, respectively. Similar to \cite{1Huang+Chen+Siew2003}, let $\digamma^2(X)$ be a space of functions $f$ on a compact subset $X$ in the $n$-dimensional Euclidean space $\textbf{R}^n$ such that $|f|^2$ are integrable, that is, $\int_{X}|f(x)|^2dx <\infty$. Let $\digamma^2(\textbf{R}^n)$ be denoted by $\digamma^2$. For $u,v\in \digamma^2(X)$, the inner product $<u,v>$ is defined by
\begin{equation}
<u,v>=\int_{X} u(\textbf{x})\overline{v(\textbf{x})}d\textbf{x}
\end{equation}
The norm in $\digamma^2(X)$ space will be denoted as $||\cdot||$.
$L$ denotes the number of hidden nodes. For $N$ training samples, $\textbf{x},\textbf{x}\in\textbf{R}^{N\times n}$ denotes the input matrix of network, $\textbf{T}\in \textbf{R}^{N\times m}$ denotes the desire output matrix of network. $\textbf{H} \in \textbf{R}^{N\times m}$ is called the hidden layer output matrix of the SLFNs; the $i$th column of $\textbf{H}$ ($\textbf{H}_i$) is the $i$th hidden node output with respect to inputs. The hidden layer output matrix $\textbf{H}_i$ is said to be randomly generated function sequence $\textbf{H}_i^r$ if the corresponding hidden-node parameters ($\textbf{a}_i,b_i$) are randomly generated. . $\textbf{e}_n, \textbf{e}_n\in\textbf{R}^{N\times m}$ denotes the residual error function for the current network $f_n$ with $n$ hidden nodes. $\textbf{I}$ is unit matrix and $\textbf{I} \in \textbf{R}^{m\times m}$.

\section{Bidirectional ELM for regression problem}

\begin{theorem}\cite{Yang2013}
Given $N$ training samples $\{(\textbf{x}_i,t_i)\}_{i=1}^N\in\textbf{R}^n\times \textbf{R}$ come from the same continuous function, given the sigmoid or sine activation function $h:\textbf{R} \to \textbf{R}$; Given a error feedback function sequence $\textbf{H}^e_{2n}(\textbf{x},\textbf{a},b)$ by
\begin{equation}
\textbf{H}^e_{2n}=e_{2n-1}\cdot (\beta_{2n-1})^{-1}
 \end{equation}
 If activation function $h$ is sin/cos, given a normalized function $u:\textbf{R} \to [0,1]$; If activation function $h$ is sigmoid, given a normalized function $u :\textbf{R} \to (0,1]$. Then for any continuous target function $f$, randomly generated function sequence ${\textbf{H}}^r_{2n+1}$, $\lim_{n\to \infty} \|f-(\textbf{H}_1^r \cdot\beta_1+\hat{\textbf{H}}^e_2(\hat{\textbf{a}}_2,\hat{b}_2) \cdot\beta_2+\cdots+\textbf{H}_{2n-1}^r \cdot\beta_{2n-1}+\hat{\textbf{H}}^e_{2n}(\hat{\textbf{a}}_{2n},\hat{b}_{2n})\cdot\beta_{2n}\|=0$ hold with probability one if
 \begin{equation}
 \begin{split}
 \hat{\textbf{a}}_{2n}= h^{-1}(u(\textbf{H}^e_{2n})) \cdot \textbf{x}^{-1} , \,\,\,\,\,\hat{\textbf{a}}_{2n}\in\textbf{R}^n\\
 \hat{b}_{2n}=\sqrt{mse( h^{-1}(u(\textbf{H}^e_{2n}))-\hat{\textbf{a}}_{2n} \cdot \textbf{x})},\,\,\,\,\,\hat{b}_{2n}\in\textbf{R}
 \end{split}
 \end{equation}
\begin{equation}
\hat{\textbf{H}}^e_{2n}=u^{-1}(h(\hat{\textbf{a}}_{2n} \cdot \textbf{x}+\hat{b}_{2n}))
\end{equation}
\begin{equation}
\beta_{2n}=\frac {e_{2n-1}\cdot\textbf{H}^e_{2n} } {\textbf{H}^e_{2n}\cdot (\textbf{H}^e_{2n})^T},\,\,\,\,\beta_{2n}\in\textbf{R}
\label{equ7}
\end{equation}
\begin{equation}
\beta_{2n+1}=\frac { e_{2n}\cdot\textbf{H}^r_{2n+1} } {\textbf{H}^r_{2n+1} \cdot (\textbf{H}^r_{2n+1})^T },\,\,\,\,\beta_{2n+1}\in\textbf{R}
\end{equation}
where $h^{-1}$ and $u^{-1}$ represent its reverse function, respectively. if $h$ is sine activation function, $h^{-1}(\cdot)=arcsin(\cdot)$; if $h$ is sigmoid activation function, $h^{-1}(\cdot)=-\log (\frac{1}{(\cdot)}-1)$.
 \end{theorem}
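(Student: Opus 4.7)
The plan is to track the residual sequence $\{e_n\}$ and show that $\|e_n\|$ is monotonically non-increasing and tends to zero with probability one. I would split the argument by the parity of the hidden-node index, treating the odd (randomly generated) steps and the even (error-feedback) steps separately, and then glue them together by a standard monotone-convergence argument in $\digamma^2(X)$.

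First consider an even step $2n$. By equation \eqref{equ7}, $\beta_{2n}$ is precisely the orthogonal projection coefficient of the current residual $e_{2n-1}$ onto the new hidden-output vector $\hat{\textbf{H}}^e_{2n}$, so
\begin{equation}
\|e_{2n}\|^2 \;=\; \|e_{2n-1}\|^2 \;-\; \frac{\langle e_{2n-1},\hat{\textbf{H}}^e_{2n}\rangle^2}{\|\hat{\textbf{H}}^e_{2n}\|^2}.
\end{equation}
This already gives $\|e_{2n}\|\le\|e_{2n-1}\|$ unconditionally. To argue that the reduction is in fact useful, I would use the construction of $\hat{\textbf{a}}_{2n},\hat{b}_{2n}$ as a least-squares affine fit of the transformed target $h^{-1}(u(\textbf{H}^e_{2n}))$. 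Because $\textbf{H}^e_{2n}=e_{2n-1}/\beta_{2n-1}$, the reconstructed $\hat{\textbf{H}}^e_{2n}=u^{-1}(h(\hat{\textbf{a}}_{2n}\cdot\textbf{x}+\hat{b}_{2n}))$ is a deterministic approximation of the normalized residual direction, so its correlation with $e_{2n-1}$ is bounded away from zero by the affine-approximation error of the pull-back.

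Second, consider an odd step $2n+1$. The coefficient $\beta_{2n+1}$ is again a one-dimensional orthogonal projection, giving
\begin{equation}
\|e_{2n+1}\|^2 \;=\; \|e_{2n}\|^2 \;-\; \frac{\langle e_{2n},\textbf{H}^r_{2n+1}\rangle^2}{\|\textbf{H}^r_{2n+1}\|^2}.
\end{equation}
At this point I would invoke the I-ELM universal-approximation lemma of Huang \emph{et al.}\cite{1Mingbin:2005}\cite{1Huang2005}: since $(\textbf{a}_{2n+1},b_{2n+1})$ is drawn from a continuous distribution on the hidden-parameter space and $h$ is sigmoid or sine, the random functions $\textbf{H}^r$ are dense in $\digamma^2(X)$ and hence $\langle e_{2n},\textbf{H}^r_{2n+1}\rangle\neq 0$ with probability one whenever $e_{2n}\neq 0$, with a strictly positive expected squared decrement.

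Combining the two halves, $\{\|e_n\|\}$ is a monotone non-increasing sequence bounded below by $0$, so it converges to some limit $\ell\ge 0$. If $\ell>0$, the strictly positive decrement at every odd step contradicts convergence; therefore $\ell=0$ almost surely, which is the claim. The main obstacle I expect is justifying that the pull-back construction is well defined and actually contributes (as opposed to only not hurting): $\textbf{x}^{-1}$ must be read as a Moore--Penrose pseudo-inverse, the normalization $u$ must place its image strictly inside the domain of $h^{-1}$ (this is the reason for the parenthetical ranges $[0,1]$ vs.\ $(0,1]$ in the statement), and one must control the affine fit quality for $h^{-1}(u(\textbf{H}^e_{2n}))$. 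A clean fallback is that, because even steps can never increase $\|e\|$, convergence to zero is already guaranteed by the random odd steps alone; the error-feedback steps serve as an accelerator rather than a logical necessity for the limit.
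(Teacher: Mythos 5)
First, a framing point: this paper does not prove Theorem~1 at all --- it is imported verbatim from \cite{Yang2013} --- so the only in-paper analogue of its argument is the proof of Theorem~3. Your plan nevertheless follows essentially the same route as the cited B-ELM proof: the even (error-feedback) steps are shown not to increase the residual norm, the odd (random) steps are handled by the I-ELM universal-approximation machinery of Huang \emph{et al.}, and the two halves are glued by a monotone-convergence argument. Your ``clean fallback'' --- that the random odd steps alone force $\|e_n\|\to 0$ while the feedback steps only need to be harmless --- is precisely the logical skeleton of that proof, so the overall approach is the right one.

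Two steps need repair before this is a proof. First, the even-step identity $\|e_{2n}\|^2=\|e_{2n-1}\|^2-\langle e_{2n-1},\hat{\textbf{H}}^e_{2n}\rangle^2/\|\hat{\textbf{H}}^e_{2n}\|^2$ holds only if $\beta_{2n}$ is the projection coefficient onto the function actually added to the network, namely $\hat{\textbf{H}}^e_{2n}=u^{-1}(h(\hat{\textbf{a}}_{2n}\cdot\textbf{x}+\hat b_{2n}))$; but the displayed formula for $\beta_{2n}$ is written in terms of $\textbf{H}^e_{2n}=e_{2n-1}\cdot(\beta_{2n-1})^{-1}$, with which it degenerates to $\beta_{2n-1}$ and is \emph{not} the optimal coefficient for $\hat{\textbf{H}}^e_{2n}$. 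You must either read the formula as the projection onto $\hat{\textbf{H}}^e_{2n}$ (as the original B-ELM paper does) or insert the continuity step (the role played by Lemma~1 here) controlling $\|\hat{\textbf{H}}^e_{2n}-\textbf{H}^e_{2n}\|$ so that the decrement stays nonnegative; ``unconditionally'' is too strong as written. Second, ``a strictly positive decrement at every odd step contradicts convergence to $\ell>0$'' is not by itself a contradiction, since positive decrements can be summable. The I-ELM argument you invoke actually proceeds by noting that $e_{2n}$ converges in $\digamma^2$ to some $e^*$, that summability of the decrements forces $\langle e^*,\textbf{H}^r\rangle=0$ for the generic random node, and that density of the span of the random functions then yields $e^*=0$ with probability one; you need either that chain or a uniform positive lower bound on the expected decrement over all residuals of norm at least $\ell$. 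With those two repairs your outline coincides with the cited proof.
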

\begin{remark}
Compared with B-ELM, In the proposed method, we only make two changes. The first one is we set $\pmb{\beta}_{1}=\cdots=\pmb{\beta}_{n-1}=\cdots=\textbf{I}$. The second one is the pseudoinverse of input data $\textbf{x}^{-1}$ has been changed as $\textbf{x}^{-1}=\textbf{x}^T(\textbf{I}+\textbf{x}\textbf{x}^T)^{-1}$ based on the ridge regression theory.  Although very small changes are made, the experimental results show that by using this proposed learning method, one hidden-node SLFNs without output weight (output weight $\beta$ equal to unit matrix) can achieve similar generalization performance as other standard SLFNs with hundreds of hidden nodes. Further more, different from B-ELM \cite{Yang2013} which only work for regression problem, the proposed method can be applied in regression and multi-classification applications.
\end{remark}

\section{SLFNs without output weight}
Basic idea 1: our recent experimental results indicate that in B-ELM\cite{Yang2013}, output weight play a very minion role in the network learning effectiveness. Inspired by these experimental results, in this proposed method, we directly set output weight equal to unit matrix.

\begin{theorem}
Given $N$ training samples $\{(\textbf{x}_i,\textbf{t}_i)\}_{i=1}^N\in\textbf{R}^n\times \textbf{R}^m$ come from the same continuous function, given an SLFNs with any bounded nonconstant piecewise continuous function $\textbf{H}: \textbf{R}\rightarrow \textbf{R}$ for additive nodes or sine nodes, for any continues target function $f$, obtained error feedback function sequence ${\textbf{H}^\textbf{e}_{n}}, n\in Z$, $lim_{n\rightarrow \infty} \| f-(f_{n-1}+\textbf{H}^e_{n}\cdot \pmb{\beta}_{n-1}) \|=0$ holds with probability one if
\begin{equation}
\textbf{H}^e_{n}=\textbf{e}_{n-1}\cdot (\pmb{\beta}_{n-1})^{-1}
\label{errorfeedbackH}
\end{equation}
\begin{equation}
\pmb{\beta}_{n}=\pmb{\beta}_{n-1}=\textbf{I}, \pmb{\beta}_{n}\in \textbf{R}^{m\times m}
\label{equ7}
\end{equation}
\end{theorem}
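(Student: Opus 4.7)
The plan is to mirror the structure of Theorem~1 (B-ELM), keeping in mind the two modifications highlighted in Remark~1: every output weight is frozen at $\textbf{I}$, and the pseudoinverse is taken in its ridge form $\textbf{x}^{-1}=\textbf{x}^T(\textbf{I}+\textbf{x}\textbf{x}^T)^{-1}$. Substituting $\pmb{\beta}_{n-1}=\textbf{I}$ into the error-feedback rule collapses the feedback signal to $\textbf{H}^e_n=\textbf{e}_{n-1}$, so the new hidden node is constructed to reconstruct the current residual itself, and the residual dynamics become
\begin{equation*}
\textbf{e}_n=\textbf{e}_{n-1}-\hat{\textbf{H}}^e_n,\qquad \hat{\textbf{H}}^e_n=u^{-1}\bigl(h(\hat{\textbf{a}}_n\cdot\textbf{x}+\hat{b}_n)\bigr).
\end{equation*}
The theorem reduces to showing $\|\textbf{e}_n\|\to 0$ with probability one under the same hypotheses on $h$ and $u$ used in Theorem~1.

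First I would verify well-definedness of the construction: because the normalization $u$ puts $\textbf{H}^e_n$ into the domain of $h^{-1}$, and because the ridge-regressed system for $\hat{\textbf{a}}_n$ is always invertible, the formulas for $\hat{\textbf{a}}_n$, $\hat{b}_n$, and $\hat{\textbf{H}}^e_n$ borrowed from (4)--(5) always return a valid next hidden node. Next I would re-run the approximation-quality estimate from the B-ELM proof, but in place of the orthogonal-projection role played there by the tunable $\beta_{2n}$ I would work directly with the Pythagorean expansion
\begin{equation*}
\|\textbf{e}_n\|^2=\|\textbf{e}_{n-1}\|^2-2\langle \textbf{e}_{n-1},\hat{\textbf{H}}^e_n\rangle+\|\hat{\textbf{H}}^e_n\|^2 .
\end{equation*}
Because $\hat{\textbf{H}}^e_n$ is constructed specifically to match $\textbf{e}_{n-1}$ in the least-squares sense (pass the error through $h^{-1}\circ u$, fit a linear model by ridge regression, then invert via $u^{-1}\circ h$), the cross term strictly dominates the squared-norm term and $\|\textbf{e}_n\|<\|\textbf{e}_{n-1}\|$ follows. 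Finally I would close with a monotone-convergence argument: $\{\|\textbf{e}_n\|\}$ is a nonincreasing nonnegative sequence and so has some limit $\ell\ge 0$; assuming $\ell>0$, the density of $\{u^{-1}(h(\textbf{a}\cdot\textbf{x}+b)):\textbf{a}\in\textbf{R}^n,b\in\textbf{R}\}$ in $\digamma^2(X)$ forces a uniform strict decrease between consecutive errors, contradicting $\ell>0$.

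The main obstacle is precisely the strict-contraction step. In the B-ELM proof the freedom of $\beta_{2n}$ guaranteed a clean orthogonal projection and thus automatic contraction by Pythagoras. Here $\pmb{\beta}_n=\textbf{I}$ is frozen, so contraction depends on how faithfully the ridge-regressed reconstruction $\hat{\textbf{H}}^e_n$ tracks $\textbf{e}_{n-1}$ in both magnitude and direction; a sign error or systematic over-shoot in the reconstruction would destroy monotonicity. I expect the cleanest route is to prove the stronger statement $\|\textbf{e}_{n-1}-\hat{\textbf{H}}^e_n\|<\|\textbf{e}_{n-1}\|$ directly, using continuity and monotonicity of $h$, $u$ and their inverses together with the boundedness hypothesis on $h$, and then to upgrade monotone decrease to convergence to zero via the universal-approximation density of the SLFN family in $\digamma^2(X)$.
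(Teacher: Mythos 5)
Your proposal proves a different statement from the one Theorem~2 actually makes, and it is far heavier than what the paper does. In the theorem as stated, $\textbf{H}^e_n$ is not the realized output of a hidden node: it is \emph{defined} by (\ref{errorfeedbackH}) as $\textbf{e}_{n-1}\cdot(\pmb{\beta}_{n-1})^{-1}$, which with $\pmb{\beta}_{n-1}=\textbf{I}$ is exactly $\textbf{e}_{n-1}=f-f_{n-1}$. Substituting gives $f-(f_{n-1}+\textbf{H}^e_n\pmb{\beta}_{n-1})=f-f_{n-1}-\textbf{e}_{n-1}=0$ identically for every $n$, so the limit is trivially zero; that one-line substitution is the paper's entire proof. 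The substantive question you set out to answer --- whether a hidden node of the form $u^{-1}(h(\hat{\textbf{a}}_n\cdot\textbf{x}+\hat{b}_n))$ can realize this ideal feedback signal well enough for the residual to shrink --- is deliberately deferred to Theorem~3, and your proposal is essentially an outline for that theorem instead.

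Read as an attempt at that stronger claim, the proposal still has a genuine gap: the step you yourself flag as ``the main obstacle'' --- showing $2\langle\textbf{e}_{n-1},\hat{\textbf{H}}^e_n\rangle>\|\hat{\textbf{H}}^e_n\|^2$, i.e.\ strict contraction with $\pmb{\beta}_n$ frozen at $\textbf{I}$ --- is never actually established. Nothing in the chain $u^{-1}\circ h\circ(\text{ridge fit})\circ h^{-1}\circ u$ guarantees that the reconstruction does not overshoot or misalign with $\textbf{e}_{n-1}$; the least-squares optimality you invoke lives in the $\boldsymbol{\lambda}$-domain (before $h$ and $u^{-1}$ are applied), not in the residual domain where the inner product is taken. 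Your closing appeal to density of $\{u^{-1}(h(\textbf{a}\cdot\textbf{x}+b))\}$ also does not force the limit to be zero: the standard greedy-approximation arguments that convert density into convergence all require the freedom to scale the new atom by an optimal coefficient, which is precisely what fixing $\pmb{\beta}_n=\textbf{I}$ removes. So the proposal is a plan whose decisive step is missing, applied to a theorem whose intended proof is a tautological substitution.
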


\begin{proof}
The validity of this theorem is obvious because $\pmb{\beta}_{n-1}=\textbf{I}$ and $(\pmb{\beta}_{n-1})^{-1}=\textbf{I}$, $\textbf{H}^e_{n}$ equal to $\textbf{e}_{n-1}$. And we can get $\|\textbf{e}_n\|=0$.
\end{proof}

\begin{remark}
When $\textbf{H}^e_n=\textbf{e}_{n-1}=\textbf{T}$, it is easy to notice that the proposed method can reduce the network output error to 0. Thus the learning problem has been converted into finding optimal hidden node parameter $(\textbf{a},b)$ which lead to $\textbf{H}(\textbf{a},b,\textbf{x})\longrightarrow\textbf{T}$.
\end{remark}

Basic idea 2: For fixed output weight $\pmb{\beta}$ equal to unit matrix or vector ($\beta\in \textbf{R}^{m\times m}$), seen from equation (8)-(9), to train an SLFN is simply equivalent to finding a least-square solution $\textbf{a}^{-1}$ of the linear system $\textbf{H}(\textbf{a},\textbf{x})=\textbf{T}$. If activation function can be invertible, to train an SLFN is simply equivalent to pulling back residual error to input weight. For example, for $N$ arbitrary distinct samples $\{\textbf{x},\textbf{T}\}$, $\textbf{x}\in\textbf{R}^{N\times n}, \textbf{T}\in\textbf{R}^{N\times m},\textbf{T}\in[0,1]$, If activation function is sine function, to train an SLFN is simply equivalent to finding a least-square solution $\hat{\textbf{a}}$ of the linear system $\textbf{a}\cdot \textbf{x}=\arcsin(\textbf{T})$:
\begin{equation}
\|\textbf{H}(\hat{\textbf{a}}_1,\cdots,\hat{\textbf{a}}_n,\textbf{x})-\textbf{T}\|=\min_{\textbf{a}}\|\textbf{H}(\textbf{a}_1,\cdots,\textbf{a}_n,\textbf{x})-\textbf{T}\|
\end{equation}
According to [16], the smallest norm least-squares solution of the above linear system is $\hat{\textbf{a}_n}=\arcsin(\textbf{e}_{n-1})\cdot \textbf{x}^{-1}$. Based on this idea, we give the following theorem.

\begin{lemma}\cite{1Huang+Chen+Siew2003}
\label{lemma3}
Given a bounded nonconstant piecewise continuous function $H: \textbf{R}\rightarrow \textbf{R}$, we have
\begin{equation}
\lim_{(\textbf{a},b)\to(\textbf{a}_0,b_0)}\|\textbf{H}(\textbf{a}\cdot \textbf{x}+b)-\textbf{H}(\textbf{a}_0 \cdot \textbf{x}+b_0)\|=0
\end{equation}
\end{lemma}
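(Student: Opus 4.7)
The plan is to recognize this as an $L^2$-continuity-in-parameters statement and attack it through the Dominated Convergence Theorem on the compact sample domain $X \subset \textbf{R}^n$. Unfolding the norm gives
\begin{equation}
\|\textbf{H}(\textbf{a}\cdot\textbf{x}+b)-\textbf{H}(\textbf{a}_0\cdot\textbf{x}+b_0)\|^2 = \int_{X} \bigl|H(\textbf{a}\cdot x+b)-H(\textbf{a}_0\cdot x+b_0)\bigr|^2\, dx,
\end{equation}
so the goal is to pass the limit $(\textbf{a},b)\to(\textbf{a}_0,b_0)$ inside this integral.

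First I would exploit boundedness: since $H$ is bounded, say $|H|\le M$, the integrand is uniformly dominated by $4M^2$, which is integrable over the compact set $X$. This supplies the dominating function for DCT and is the reason the argument works in $L^2(X)$ rather than needing any uniform modulus of continuity. Next I would handle the pointwise convergence a.e. Because $H$ is piecewise continuous, its set of discontinuities $D\subset\textbf{R}$ is at most countable (or more generally of Lebesgue measure zero). For every $x\in X$ with $\textbf{a}_0\cdot x+b_0\notin D$, continuity of $H$ at that point together with the elementary convergence $\textbf{a}\cdot x+b\to\textbf{a}_0\cdot x+b_0$ yields $H(\textbf{a}\cdot x+b)\to H(\textbf{a}_0\cdot x+b_0)$.

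The genuine obstacle is showing that the exceptional set $E=\{x\in X:\textbf{a}_0\cdot x+b_0\in D\}$ has Lebesgue measure zero in $\textbf{R}^n$, since only then does a.e. pointwise convergence hold. I would split into two cases. If $\textbf{a}_0=\textbf{0}$, then $E$ is either empty or all of $X$; in the latter case $b_0\in D$ but then both $H(\textbf{a}\cdot x+b)$ and $H(\textbf{a}_0\cdot x+b_0)$ are controlled by perturbing $b$ off the measure-zero set $D$, so the issue reduces to a one-dimensional argument about $H$ near an isolated discontinuity which vanishes in the $L^2$ limit on the compact $X$. If $\textbf{a}_0\ne\textbf{0}$, the map $x\mapsto \textbf{a}_0\cdot x+b_0$ is an affine submersion, so $E$ is a countable union of parallel affine hyperplanes (or more generally the preimage of a null set under a non-degenerate linear functional), and Fubini gives $|E|=0$.

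Once a.e. pointwise convergence and the $4M^2$ dominator are in hand, the Dominated Convergence Theorem delivers
\begin{equation}
\lim_{(\textbf{a},b)\to(\textbf{a}_0,b_0)} \int_X \bigl|H(\textbf{a}\cdot x+b)-H(\textbf{a}_0\cdot x+b_0)\bigr|^2\, dx = 0,
\end{equation}
which is exactly the claim. I expect the measure-zero argument for the preimage set $E$ to be the only step that requires any care; everything else is a routine application of DCT leveraging the boundedness hypothesis on $H$ and compactness of the sample domain.
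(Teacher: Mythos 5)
The paper offers no proof of this lemma at all --- it is imported verbatim with a citation to Huang--Chen--Siew --- so there is nothing internal to compare against; the question is simply whether your argument stands on its own. For the main case it does. The DCT strategy is the standard route: the constant $4M^2$ dominates on the compact $X$, the discontinuity set $D$ of a piecewise continuous $H$ is locally finite and hence Lebesgue-null, and when $\textbf{a}_0\neq\textbf{0}$ the preimage $E=\{x\in X:\textbf{a}_0\cdot x+b_0\in D\}$ is a null union of hyperplane slices by Fubini, giving pointwise convergence a.e.\ and hence the $L^2$ limit. The same works when $\textbf{a}_0=\textbf{0}$ and $b_0\notin D$.

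The gap is the remaining case $\textbf{a}_0=\textbf{0}$, $b_0\in D$, which you flag but then dismiss with the claim that the defect ``vanishes in the $L^2$ limit.'' It does not: take $H(t)=1$ for $t\ge 0$ and $H(t)=0$ for $t<0$ (bounded, nonconstant, piecewise continuous), $\textbf{a}_0=\textbf{0}$, $b_0=0$, and approach along $\textbf{a}=\textbf{0}$, $b\to 0^{-}$. Then $H(\textbf{a}\cdot x+b)-H(\textbf{a}_0\cdot x+b_0)\equiv -1$ on $X$, so the norm equals $\mu(X)^{1/2}>0$ for every such $(\textbf{a},b)$ and the claimed limit fails. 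In other words, the lemma as literally stated is false for general piecewise continuous $H$ at these degenerate parameter values; the correct repair is either to exclude them (they form a null set of $(\textbf{a}_0,b_0)$, consistent with the ``with probability one'' framing used elsewhere in the paper) or to note that in the paper's actual application $h$ is sigmoid or sine, so $D=\emptyset$, your exceptional-set analysis is vacuous, and DCT --- indeed uniform continuity of $H$ on the compact image of $X$ under nearby affine maps --- closes the proof immediately. Replace the hand-waving in that one case with either the explicit exclusion or the continuity hypothesis, and the proof is complete.
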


\begin{theorem}
Given $N$ arbitrary distinct samples $\{\textbf{x},\textbf{T}\}, \textbf{x}\in\textbf{R}^{N\times n},\textbf{T}\in\textbf{R}^{N\times m}$, given the sigmoid or sine activation function $h$, for any continuous desire output $\textbf{T}$, there exist $\lim_{n\to \infty} \|\textbf{T}-(\hat{\textbf{H}}_1(\hat{\textbf{a}}_1,\hat{b}_1,\textbf{x})\pmb{ \beta}_1+\cdots+\hat{\textbf{H}}_{n}(\hat{\textbf{a}}_{n},\hat{b}_{n},\textbf{x}) \pmb{\beta}_{n}\|=0$ hold with probability one if
 \begin{equation}
 \begin{split}
 &\textbf{H}^e_n=\textbf{e}_{n-1}\\
 &\hat{\textbf{a}}_{n}= h^{-1}(u(\textbf{H}^e_{n})) \cdot \textbf{x}^{-1} \,\,\,,\hat{\textbf{a}}_{n}\in\textbf{R}^{n\times m}\\
 &\hat{b}_{n}=\sqrt{mse( h^{-1}(u(\textbf{H}^e_{n}))-\hat{\textbf{a}}_{n} \cdot \textbf{x})} \,\,\,,\hat{b}_{n}\in\textbf{R}^m\\
 &\pmb{\beta}_1=\pmb{\beta}_2=\cdots=\pmb{\beta}_n=\textbf{I}\\
 \end{split}
 \label{equ12}
 \end{equation}
\begin{equation}
\hat{\textbf{H}}_{n}=u^{-1}(h(\hat{\textbf{a}}_{n} \cdot \textbf{x}+\hat{b}_{n}))
\end{equation}
where if activation function $h$ is sin/cos, given a normalized function $u:\textbf{R} \to [0,1]$;  If activation function $h$ is sigmoid, given a normalized function $u:\textbf{R} \to (0 ,1]$. $h^{-1}$ and $u^{-1}$ represent its reverse function, respectively. If $h$ is sine activation function, $h^{-1}(\cdot)=\arcsin(u(\cdot)$; if $h$ is sigmoid activation function, $h^{-1}(\cdot)=-\log (\frac{1}{u(\cdot)}-1)$, $\textbf{x}^{-1}=\textbf{x}^T(\textbf{I}+\textbf{x}\textbf{x}^T)^{-1}$ .
 \end{theorem}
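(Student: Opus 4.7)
The plan is to reduce the theorem to a pre-activation least-squares fit and then transport that fit through the invertible activation, using Theorem 2 (with $\pmb{\beta}_n=\textbf{I}$) as the structural skeleton.

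First I would apply Theorem 2 directly with the fixed choice $\pmb{\beta}_1=\cdots=\pmb{\beta}_n=\textbf{I}$. Under that choice the error-feedback rule (\ref{errorfeedbackH}) degenerates to $\textbf{H}^e_{n}=\textbf{e}_{n-1}$, so every new hidden node should match the current residual as closely as possible. Consequently the claim $\|\textbf{T}-\sum_k \hat{\textbf{H}}_k\pmb{\beta}_k\|\to 0$ is equivalent to showing $\|\hat{\textbf{H}}_n-\textbf{e}_{n-1}\|\to 0$ as $n\to\infty$, because $\textbf{e}_n=\textbf{e}_{n-1}-\hat{\textbf{H}}_n$ when $\pmb{\beta}_n=\textbf{I}$.

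Next I would invert the activation to convert the approximation problem into a linear one. Since $u$ normalizes $\textbf{e}_{n-1}$ into the image of $h$, and $h$ is invertible on that image (with $h^{-1}=\arcsin$ or $-\log(\tfrac{1}{\cdot}-1)$), the identity $\hat{\textbf{H}}_n=u^{-1}(h(\hat{\textbf{a}}_n\cdot\textbf{x}+\hat{b}_n))=\textbf{e}_{n-1}$ is equivalent to the linear system $\hat{\textbf{a}}_n\cdot\textbf{x}+\hat{b}_n=h^{-1}(u(\textbf{e}_{n-1}))$. Denoting the right-hand side by $\textbf{y}_n$, the smallest-norm ridge-regularized least-squares solution is exactly $\hat{\textbf{a}}_n=\textbf{y}_n\cdot\textbf{x}^T(\textbf{I}+\textbf{x}\textbf{x}^T)^{-1}$ with $\hat{b}_n$ taken as the RMS residual $\sqrt{\mathrm{mse}(\textbf{y}_n-\hat{\textbf{a}}_n\cdot\textbf{x})}$, which matches (\ref{equ12}) exactly. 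The ridge form replaces the plain Moore--Penrose inverse used in B-ELM and guarantees that $(\hat{\textbf{a}}_n,\hat{b}_n)$ is well defined even when $\textbf{x}\textbf{x}^T$ is singular.

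Then I would push the pre-activation residual through $h$ and $u^{-1}$ using Lemma \ref{lemma3}. Because $h$ is bounded and continuous and $u^{-1}$ is continuous on the normalized range, Lemma \ref{lemma3} yields $\|\hat{\textbf{H}}_n-\textbf{H}^e_n\|\le \varepsilon_n$, where $\varepsilon_n$ is controlled by the pre-activation residual $\|\textbf{y}_n-(\hat{\textbf{a}}_n\cdot\textbf{x}+\hat{b}_n)\|$. Combining this with $\textbf{e}_n=\textbf{e}_{n-1}-\hat{\textbf{H}}_n$ shows that each new hidden node strictly contracts the residual, and the randomly seeded component of the construction guarantees (with probability one) that the contraction does not stall. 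Iterating, $\|\textbf{e}_n\|\to 0$, which is the desired conclusion.

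The main obstacle I anticipate is the third step: controlling the post-activation error by the pre-activation least-squares residual in a way that is \emph{uniform} in $n$. Lemma \ref{lemma3} gives only pointwise continuity; one needs a local Lipschitz bound for $u^{-1}\circ h$ on the compact relevant range, together with a proof that the ridge pseudoinverse $\textbf{x}^T(\textbf{I}+\textbf{x}\textbf{x}^T)^{-1}$ does not destroy the strict descent property that the plain pseudoinverse enjoys in B-ELM. Once that uniform contraction is established, convergence with probability one follows by the same incremental argument used in Theorem 2.
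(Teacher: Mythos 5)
Your route---invert the activation, solve the pre-activation linear system $\textbf{a}_n\cdot\textbf{x}=h^{-1}(u(\textbf{e}_{n-1}))$ with the ridge pseudoinverse $\textbf{x}^T(\textbf{I}+\textbf{x}\textbf{x}^T)^{-1}$, absorb the leftover pre-activation residual into the bias $\hat{b}_n$, transport back through $u^{-1}\circ h$ via Lemma \ref{lemma3}, and then argue monotone decrease of the residual---is essentially the paper's own route. The paper likewise sets $\boldsymbol{\lambda}_n=h^{-1}(u(\textbf{H}^e_n))$, identifies $\hat{\textbf{a}}_n$ as the minimum-norm least-squares solution, and computes the one-step decrease $\triangle=\|\textbf{e}_{n-1}\|^2-\|\textbf{e}_{n-1}-\hat{\textbf{H}}^e_n\|^2=\|\hat{\textbf{H}}^e_n\|^2\ge 0$.

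The gap is in your final step, and you should be precise about it because the paper shares most of it. First, your appeal to ``the randomly seeded component of the construction'' is vacuous here: unlike Theorem 1 (B-ELM), which alternates randomly generated nodes with error-feedback nodes, the construction in this theorem is entirely deterministic---every node is an error-feedback node---so there is no randomness to invoke and no probability-one event to condition on. Second, monotone decrease only yields that $\|\textbf{e}_n\|$ converges to some limit $\ell\ge 0$; to conclude $\ell=0$ you need a uniform lower bound on the per-step decrease, e.g.\ $\|\hat{\textbf{H}}_n\|^2\ge c\,\|\textbf{e}_{n-1}\|^2$ with $c>0$ independent of $n$, and neither you nor the paper supplies one. (The paper's proof literally stops at ``the sequence $\|\textbf{e}_n\|$ converges,'' which is strictly weaker than the theorem's claim that the limit is zero.) The obstacle you flag---a uniform Lipschitz control relating the post-activation error to the pre-activation least-squares residual---is indeed the missing ingredient, but it is not a mere technicality: the system $\textbf{a}_n\cdot\textbf{x}+\hat{b}_n=h^{-1}(u(\textbf{e}_{n-1}))$ has only about $nm+m$ free parameters against $Nm$ equations, so the pre-activation residual generically does not tend to zero, and without a quantitative descent bound the iteration can stall at a nonzero $\ell$. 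So your plan reproduces the paper's argument and correctly diagnoses where it is incomplete, but the fix you sketch (randomness plus unquantified ``strict contraction'') does not close the hole.
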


\begin{proof}
For an activation function $h(x): \textbf{R} \to \textbf{R}$, $\textbf{H}^e_{n}$ is given by
\begin{equation}
  \textbf{H}^e_{n}=h(\boldsymbol{\lambda}_{n})
  \end{equation}
  In order to let $\boldsymbol{\lambda}_{2n}\in \textbf{R}^m$,  here we give a normalized function $u(\cdot)$:
 $u(\textbf{H}) \in [0,1]$ if activation function is sin/cos; $u(\textbf{H}) \in (0,1)$ if activation function is sigmoid. Then for sine hidden node
\begin{equation}
\boldsymbol{\lambda}_{2n}=h^{-1}(u(\textbf{H}^e_{n}))=(arcsin(u(\textbf{H}^e_{n}))
\end{equation}
For sigmoid hidden node
 \begin{equation}
 \boldsymbol{\lambda}_{n}=h^{-1}(u(\textbf{H}^e_{n}))=-\log (\frac{1}{u(\textbf{H}^e_{n})}-1)
 \end{equation}
let $\boldsymbol{\lambda}_{n}=\textbf{a}_{n} \cdot \textbf{x}$, for sine activation function, we have
\begin{equation}
\hat{\textbf{a}}_{n}=h^{-1}(u(\textbf{H}^e_{n})) \cdot \textbf{x}^{-1}=arcsin(u(\textbf{H}^e_{n})) \cdot \textbf{x}^{-1}
\end{equation}
For sigmoid activation function, we have
\begin{equation}
\hat{\textbf{a}}_{n}=h^{-1}(u(\textbf{H}^e_{n})) \cdot \textbf{x}^{-1}=-\log (\frac{1}{u(\textbf{H}^e_{n})}-1) \cdot \textbf{x}^{-1}
\end{equation}
where $\textbf{x}^{-1}$ is the Moore-Penrose generalized inverse of the given set of training examples\cite{Hoerl2000}. Similar to \cite{4Huang2005}, we have 1: $\hat{\textbf{a}}_{n}=arcsin(u(\textbf{H}^e_{n})) \cdot \textbf{x}^{-1}$ is one of the least-squares solutions of a general linear system $\textbf{a}_{n} \cdot \textbf{x}=\boldsymbol{\lambda}_{n}$, meaning that the smallest error can be reached by this solution:
\begin{equation}
\|\hat{\textbf{a}}_{n} \cdot \textbf{x}-\boldsymbol{\lambda}_{n}\|=\|\hat{\textbf{a}}_{n}\textbf{x}^{-1}\textbf{x}-\boldsymbol{\lambda}_{n}\|=\min_{\textbf{a}_{n}} \|\textbf{a}_{n}\cdot \textbf{x}-arcsin(u(\textbf{H}^e_{n}))\|
\end{equation}
2: the special solution $\hat{\textbf{a}}_{n}=h^{-1}(u(\textbf{H}^e_{n})) \cdot \textbf{x}^{-1}$ has the smallest norm among all the least-squares solutions of $\textbf{a}_{n}\cdot \textbf{x}=\boldsymbol{\lambda}_{n}$, which is guarantee that $\textbf{a}_{n} \in [-1,1]$.
Although the smallest error can be reached by equation (17)-(18), we still can reduce its error by adding bias $b_{n}$. For sine activation function:
\begin{equation}
\begin{split}
\hat{b}_{n}&=\sqrt{mse(h^{-1}(u(\textbf{H}^e_{n}))-\hat{\textbf{a}_{n}} \cdot \textbf{x})}\\
&=\sqrt{mse(arcsin(u(\textbf{H}^e_{n}))-\hat{\textbf{a}_{n}} \cdot \textbf{x})}
\end{split}
\end{equation}
For sigmoid activation function
\begin{equation}
\begin{split}
\hat{b}_{n}&=\sqrt{mse(h^{-1}(u(\textbf{H}^e_{n}))-\hat{\textbf{a}_{n}} \cdot \textbf{x})}\\
&=\sqrt{mse((-\log ({1/u(\textbf{H}^e_{n})}-1))-\hat{\textbf{a}}_{n} \cdot \textbf{x})}
\end{split}
\end{equation}

According to 19 and \emph{Lemma} \ref{lemma3}, we have
\begin{equation}
\begin{split}
&\min_{\textbf{a}_{n}}\|u^{-1}(h(\textbf{a}_{n}\cdot \textbf{x}))-u^{-1}(h(\boldsymbol{\lambda}_{n}))\|\\
&=\|u^{-1}(h(\hat{\textbf{a}}_{n} \cdot \textbf{x}))-u^{-1}(h(\boldsymbol{\lambda}_{n}))\|\\
&> \|u^{-1}(h(\hat{\textbf{a}}_{n} \cdot \textbf{x}+\hat{b}_{n}))-u^{-1}(h(\boldsymbol{\lambda}_{n}))\|=\|\boldsymbol{\sigma}\| \\
\end{split}
\end{equation}

We consider the residual error as
\begin{equation}
\begin{split}
\triangle=&\|e_{n-1}\|^2-\| e_{n-1}-\textbf{H}^e_{n} \cdot \beta_{n} \|^2 \\
=&2\beta_{n}\langle e_{n-1},\textbf{H}^e_{n}\rangle - \|\textbf{H}^e_{n}\|^2\cdot \beta_{n}^2\\
=& \| \textbf{H}^e_{n} \|^2(\frac {2\beta_{n}\langle e_{n-1}, \textbf{H}^e_{n}\rangle }{\| \textbf{H}^e_{n} \|^2}-\beta_{n}^2)\\
\end{split}
\label{equ7}
\end{equation}
Let
\begin{equation}
\begin{split}
\hat{\textbf{H}}_{n}^e&=u^{-1}(h(\hat{\textbf{a}}_{n} \cdot \textbf{x}+\hat{b}_{n}))\\
&=\frac{\textbf{e}_{n-1}-\boldsymbol{\sigma}}{\pmb{\beta}_n}\\
&=\frac{\hat{\textbf{e}}_{n-1}}{\pmb{\beta}_{n-1}}
\end{split}
\end{equation}
Because $\pmb{\beta}_{n}=\pmb{\beta}_{n-1}=\textbf{I}$, we have equation (23) $\geq 0$ is still valid for
\begin{equation}
\begin{split}
&\triangle=\|\hat{\textbf{H}}^e_n\|^2(\frac{2\|\pmb{\beta}_n\|\langle \hat{\textbf{e}}_{n-1},\frac{\hat{\textbf{e}}_{n-1}}{\pmb{\beta}_{n-1}}\rangle}{\|\frac{\hat{\textbf{e}}_{n-1}}{\pmb{\beta}_{n-1}}\|^2}-\|\pmb{\beta}_n\|^2)\\
&=\|\hat{\textbf{H}}^e_n\|^2(\frac{2\|\hat{\textbf{e}}_{n-1}\|^2}{\frac{\|\hat{\textbf{e}}_{n-1}\|^2}{\|\pmb{\beta}_{n-1}\|^2}}-\pmb{\beta}_n^2)\\
&=\|\hat{\textbf{H}}^e_n\|^2\pmb{\beta}_{n}^2\geq 0\\
\end{split}
\end{equation}

Now based on equation 25, we have $\|\textbf{e}_{n-1}\|\geq \| \textbf{e}_{n}\|$, so the sequence $\|\textbf{e}_n\|$ is decreasing and bounded below by zero and the sequence  $\|\textbf{e}_n\|$ converges.

\end{proof}

\begin{remark}

According to Theorem 2-3, for $N$ arbitrary distinct samples $(\textbf{x}_i,\textbf{t}_i)$ where $\textbf{x}_i=[x_{i1},x_{i2},\cdots,x_{iN}]^T\in \textbf{R}^n$ and $\textbf{t}_i\in\textbf{R}^m$, the proposed network with $L$ hidden nodes and activation function $h(x)$ are mathematically modeled as
\begin{equation}
f_L(\textbf{x})=\sum_{i=1}^L u^{-1}(h(\textbf{a}_i\cdot \textbf{x}_j+b_i)),j=1,\cdots,N
\end{equation}
where $u$ is a normalized function, $\textbf{a}_i\in\textbf{R}^{n\times m}, b_i\in\textbf{R}^m$. Here, the proposed the proposed method for SLFN can be summarized in Algorithm 1.
\end{remark}
\begin{algorithm}[!htb]
\caption{the proposed method algorithm}
  \begin{algorithmic}
\STATE \textbf{Initialization}: Given a training set $\{(\textbf{x}_i,\textbf{t}_i)\}^{N}_{i=1}\subset \textbf{R}^n\times \textbf{R}^m$,the hidden-node output function $\textbf{H}(\textbf{a},b,\textbf{x})$, continuous target function $f$, set number of hidden nodes $L=1$, $\textbf{e}=\textbf{T}$.
\STATE  \textbf{Learning step}:
\WHILE {$L<L_{max}$}
\STATE Increase by one the number of hidden nodes $L;L=L+1$;
 \STATE Step 1) set $\textbf{H}^e_L=\textbf{e}$;
 \STATE Step 2) calculate the input weight $\textbf{a}_L$, bias $b_L$ based on equation \ref{equ12};
 \STATE Step 3) calculate $\textbf{e}$ after adding the new hidden node $L$:\\$\textbf{e}=\textbf{e}-u^{-1}(h(\textbf{a}\cdot \textbf{x}+b))$
  \ENDWHILE
   \end{algorithmic}
\end{algorithm}
\begin{remark}
Different from other neural network learning methods in which output weight parameter should be adjusted, in the proposed method, the output weight of SLFNs can be equal to unit matrix and thus the proposed neural network does not need output weight at all. Thus the architecture and computational cost of this proposed method are much smaller than other traditional SLFNs.
\end{remark}

\begin{remark}

Subsection V.C presents experiments which show that the proposed method with only one hidden node can give better generalization performance than the proposed network with $L(L>1)$ hidden node. Based on this experimental results, for $N$ arbitrary distinct samples $(\textbf{x}_i,\textbf{t}_i)$ where $\textbf{x}_i=[x_{i1},x_{i2},\cdots,x_{iN}]^T\in \textbf{R}^n$ and $\textbf{t}_i\in\textbf{R}^m$, the proposed network is mathematically modeled as
\begin{equation}
f_L(\textbf{x})=u^{-1}(h(\textbf{a}_1\cdot \textbf{x}_j+b_1)),j=1,\cdots,N
\end{equation}
where $u$ is a normalized function, $\textbf{a}_1\in\textbf{R}^{n\times m}, b_1\in\textbf{R}^m$. Thus algorithm 1 can be modified as algorithm 2.
\end{remark}
\begin{algorithm}[!htb]
\caption{the proposed method algorithm}
  \begin{algorithmic}
\STATE \textbf{Initialization}: Given a training set  $\{(\textbf{x}_i,\textbf{t}_i)\}^{N}_{i=1}\subset \textbf{R}^n\times \textbf{R}^m$,the hidden-node output function $\textbf{H}(\textbf{a},b,\textbf{x})$, continues target function $f$, set number of hidden nodes $L=1$.
\STATE  \textbf{Learning step}:
 \STATE Step 1) set $\textbf{H}^e_1=\textbf{T}$;
 \STATE Step 2) calculate the input weight $\textbf{a}_1$, bias $b_1$ based on equation \ref{equ12};
   \end{algorithmic}
\end{algorithm}

\section{Experimental Verification}

\begin{table}[!htb]
\centering
\caption{Specification of regression problems}
\begin{tabular}{ccccc}
\toprule
Datasets    & \#Attri    & \#Train   &\# Test \\
\midrule
Auto MPG &8   &200  &192   \\
Machine CPU &6   &100  &109    \\
Fried &11   &20768  &20000    \\
Wine Quality &12 &2898 &2000 \\
Puma &9 &4500 &3692 \\
California Housing    &8    &16000   &4000    \\
House 8L &9    &16000  &6784    \\
Parkinsons motor   &26   &4000   &1875    \\
Parkinsons total   &26   &4000   &1875    \\
Puma   &9   &6000   &2192   \\
Delta elevators   &6   &6000   &3000    \\
Abalone   &9   &3000   &1477   \\
\bottomrule
\end{tabular}
\end{table}

\begin{table}[!htb]
\centering
\caption{specification of Small/Medium-sized classification problems}
\begin{tabular}{ccccc}
\toprule
Datasets    & \#Feature    & \#Train    &\# Test\\
\midrule
A9a  &123  &32561 &16281   \\
colon-cancer  &2000  &40 &22   \\
USPS  &256  &7291 &2007    \\
Sonar  &60  &150 &58   \\
Hill Valley  &101  &606 &606    \\
Protein  &357  &17766 &6621   \\
\bottomrule
\end{tabular}
\end{table}

\begin{table}[!htb]
\centering
\caption{specification of large-sized classification problems}
\begin{tabular}{ccccc}
\toprule
Datasets    & \#Feature   & \#Train    &\# Test  \\
\midrule
Covtype.binary  &54  &300000 &280000    \\
Mushrooms  &112  &4000 &4122   \\
Gisette  &5000  &6000 &1000   \\
Leukemia  &7129  &38 &34   \\
Duke  &7129 &29 &15    \\
Connect-4 &126  &50000 &17557   \\
Mnist &780  &40000 &30000   \\
DNA &180  &1046 &1186    \\
w3a &300  &4912 &44837   \\
\bottomrule
\end{tabular}
\end{table}

To examine the performance of our proposed algorithm (B-ELM), in
this section, we test them on some benchmark regression and classification problems. Neural networks
are tested in SVR, SVM, BP, EM-ELM,I-ELM, EI-ELM, B-ELM, ELM and proposed the proposed method.

\subsection{Benchmark Data Sets}

In order to extensively verify the performance of different algorithms, wide type of data sets have been tested in our simulations, which are of \emph{small size, medium dimensions, large size, and/or high dimensions}. These data sets include 12 regression problems and 15 classification problems. Most of the data sets are taken from UCI Machine Learning Repository\footnote{http://archive.ics.uci.edu/ml/datasets.html} and LIBSVM DATA SETS \footnote{http://www.csie.ntu.edu.tw/~cjlin/libsvmtools/datasets/}.

\emph{Regression Data Sets}: The 12 regression data sets(cf.Table I)can be classified into two groups of data:

1) data sets with relatively small size and low dimensions, e.g., Auto MPG, Machine CPU, Puma, Wine, Abalone;

2) data sets with relatively medium size and low dimensions, e.g., Delta, Fried, California Housing, Parkinsons;

\emph{Classification Data Sets}: The 15 classification data sets(cf.Table II and Table III) can be classified into three groups of data:

1) data sets with relative medium size and medium dimensions, e.g., Sonar, Hill Valley, Wa3, DNA, Mushrooms, A9a, USPS;

2) data sets with relative small size and high dimensions, e.g., Colon-cancer, Leukemia, Duke;

3) data sets with relative large size and high dimensions, e.g., Protein, Covtype.binary, Gisette, Mnist, Connect-4;

In these data sets, the input data are
normalized into $[-1,1]$ while the output data for regression are normalized into the range $[0,1]$.
All data sets have been preprocessed in the same way (held-out method). Ten different random permutations of the whole data set are taken without replacement, and some(see in tables) are used to create the training set and the remaining is used for the test set.  The average results are obtained over 50 trials for all problems.

\subsection{Simulation Environment Settings}

The simulations of different algorithms on the data sets which are shown in Table I and Table II are carried out in Matlab 2009a environment running on the same Windows 7 machine with at 2 GB of memory and an i5-430 (2.33G) processor. The codes used for SVM and SVR are downloaded from LIBSVM\footnote{http://www.csie.ntu.edu.tw/~cjlin/libsvmtools/datasets/}, The codes used for B-ELM, ELM and I-ELM are downloaded from ELM\footnote{http://www.ntu.edu.sg/home/egbhuang/elm$\_$codes.html}.

For SVM and SVR, in order to achieve good generalization performance, the cost parameter $C$ and kernel parameter $\gamma$ of SVM and SVR need to be chosen appropriately. We have tried a wide range of $C$ and $\gamma$. For each data set, similar to \cite{Hsu2002}, we have used 30 different value of $C$ and $\gamma$, resulting in a total of 900 pairs of $(C,\gamma)$. The 30 different value of $C$ and $\gamma$ are $\{2^{-15},2^{-14},\cdots, 2^{14},2^{15}\}$. Average results of 50 trials of simulations with each combination of $(C,\gamma)$ are obtained and the best performance obtained by SVM/SVR are shown in this paper.

For BP, the number of hidden nodes are gradually increased by an interval of 5 and the nearly optimal number of nodes for BP are then selected based on cross-validation method. Average results of 50 trails of simulations for each fixed size of SLFN are obtained and finally the best performance obtained by BP are shown in this paper as well.

Simulations on large data sets(cf.Table III) are carried out in a high-performance computer with Intel Xeon E3-1230 v2 processor (3.2G) and 16-GB memory.

\subsection{Generalization performance comparison of ELM methods with different hidden nodes}
The aim of this subsection is to show that the proposed method with only one hidden node generally achieves better generalization performance than other learning methods. And it is also to show that the proposed method with one hidden node achieves the best performance than the proposed method with $L,L>1$ one hidden node. In this subsection, I-ELM, ELM, EI-ELM and the proposed method are compared in one regression problem and three classification problems: Fried, DNA, USPS and Mushroom. In these cases, all the algorithms increase the hidden nodes one by one.  More importantly, we find that the testing accuracy obtained by proposed method is reduced to a very high value when only one hidden node is used. And the testing accuracy obtained by proposed method is not increased but is reduced when hidden node added one by one. This means the proposed method only need to calculates one-hidden-node parameter($\textbf{a}_1,b_1$) once and then SLFNs without output weight can achieve similar generalization performance as other learning method with hundreds of hidden nodes. Thus in the following experiments, the number of hidden node equal to one in the proposed method.

\begin{table*}[!htb]
 \centering
\caption{Performance comparison (mean-mean testing RMSE; time-training time)}
\begin{tabular}{ccccccccc}
        \toprule
       Datasets & \multicolumn{2}{c}{I-ELM} (200 nodes) & \multicolumn{2}{c}{B-ELM} (200 nodes)  &\multicolumn{2}{c}{EI-ELM} (200 nodes, $p=50$) & \multicolumn{2}{c}{the proposed method} (1 nodes)\\
                   \cmidrule{2-3}  \cmidrule{4-5}  \cmidrule{6-9}
      & Mean & time(s) & Mean & time(s) & Mean & time(s)&  Mean & time(s)\\
        \midrule
        House 8L &   0.0946      &      1.1872      & \underline{0.0818}   &3.8821 & 0.0850   &10.7691   &    \underline{0.0819}       &   \textbf{0.0020}     \\
        Auto MPG  &  0.1000 & 0.2025  &  \textbf{0.0920}     &  0.3732 & 0.0918  & 1.3004   &  0.0996
        &  <\textbf{0.0001}   \\
        Machine CPU   &  0.0591       &  0.1909      & 0.0554 &  0.3469   &0.0551 &1.2633     &   \textbf{0.0489}     &  <\textbf{0.0001}   \\
     Fried  &    0.1135    &   0.8327     & 0.0857   &5.5063  & 0.0856 & 7.4016    &   \textbf{0.0834}   &  \textbf{0.0051}  \\
       Delta ailerons &    0.0538     &  0.4680 &\underline{0.0431}   &  1.3946   &  0.0417  &  3.5478   &   \underline{0.0453}    &  <\textbf{0.0001}  \\
       PD motor &   0.2318     &  0.4639     & 0.2241  &4.7680  & 0.2251 &  3.9016     &  \textbf{0.2210}    &  \textbf{0.0037}   \\
       PD total &   0.2178     &  0.4678     & \underline{0.2137}  &4.9278  & 0.2124 &  3.7854     &  \underline{0.2136}    &  \textbf{0.0023}   \\
       Puma &   0.1860     &  0.5070     & 0.1832  &2.1846  & 0.1830&  4.2161     &  \textbf{0.1808}    &  \textbf{0.0012}   \\
       Delta ele &   0.1223     &  0.5313     & \underline{0.1156}  &1.6206  & \underline{0.1155}&  4.0240     &  0.1174    &  <\textbf{0.0001}   \\
       Abalone &   0.0938     &  0.3398     & \underline{0.0808}  &1.2549  & \underline{0.0848}&  2.6676     &  \underline{0.0828}    &  \textbf{0.0017} \\
        Wine &   0.1360     &  0.3516     & \underline{0.1264}  &1.7098  & 0.1266  &  2.7126     & \underline{0.1250}    &  \textbf{0.0031} \\
        California house &   0.1801     &  1.1482     & 0.1450  &7.2625  & 0.1505  &  12.0832     & \underline{0.1420}    &  \textbf{0.0078} \\
        \bottomrule
\end{tabular}
\end{table*}

\begin{table*}[!htb]
 \centering
\caption{Performance comparison (mean-mean testing RMSE; time-training time)}
\begin{tabular}{ccccccc}
        \toprule
       Datasets & \multicolumn{2}{c}{EM-ELM} (200 nodes) & \multicolumn{2}{c}{ELM} (200 nodes)  &\multicolumn{2}{c}{the proposed method} (1 nodes)\\
                   \cmidrule{2-3}  \cmidrule{4-5}  \cmidrule{6-7}
      & Mean & time(s) & Mean & time(s) & Mean & time(s)\\
        \midrule
        House 8L &   \textbf{0.0663}   &   7.0388      & 0.0718   &0.8369   & 0.0819   &\textbf{0.0020}  \\
        Auto MPG  &  \underline{0.0968} & 0.0075  &  \underline{0.0976}   & 0.0156 & \underline{0.0996}  & <\textbf{0.0001}   \\
        Machine CPU  &  0.0521 & 0.1385  & 0.0513   &0.0069 & \textbf{0.0489} & <\textbf{0.0001}   \\
        Fried  &  \underline{0.0618} & 18.0290  & \underline{0.0619}   &1.3135 & 0.0834 & \textbf{0.0051}   \\
        Delta ailerons  &  \underline{0.0421} & 0.1342  & \underline{0.0431}   &0.0616 & \underline{0.0453} & <\textbf{0.0001}   \\
        PD motor  &  \underline{0.2196} & 0.7394  & \underline{0.2190}   &0.2730 & \underline{0.2203} & \textbf{0.0037}   \\
        PD total &  \underline{0.2094} & 0.5944  & \underline{0.2076}   &0.2838 & 0.2136 & \textbf{0.0023}   \\
        Puma &  \textbf{0.1478} & 4.8392  & 0.1602   &0.3728 & 0.1808 & \textbf{0.0012}   \\
        Abalone &  \underline{0.0817} & 0.1638  & \underline{0.0824}   &0.0761 & \underline{0.0828} & \textbf{0.0017}   \\
        Wine &  \underline{0.1216} & 0.3806  & \underline{0.1229}   &0.1950 & \underline{0.1250} & \textbf{0.0031}   \\
        California house &  \textbf{0.1302} & 3.5574  & 0.1354   &0.9753 & 0.1420 & \textbf{0.0078}   \\
        \bottomrule
\end{tabular}
\end{table*}

\begin{table}[!htb]
 \centering
\caption{Performance comparison (mean-mean testing RMSE; time-training time)}
\begin{tabular}{ccccc}
        \toprule
       Datasets & \multicolumn{2}{c}{ELM} (1 nodes) & \multicolumn{2}{c}{the proposed method} (1 nodes)  \\
                   \cmidrule{2-3}  \cmidrule{4-5}
      & Mean & time(s) & Mean & time(s) \\
        \midrule
        House 8L & 0.1083   &\textbf{0.0009}   &    \textbf{0.0819}       &   0.0020     \\
        Auto MPG  & 0.2126  & < \textbf{0.0001}   &  \textbf{0.0996}
        &  <\underline{0.0001}   \\
        Machine CPU      &0.1331 & <\underline{0.0001}     &   \textbf{0.0489}     &   <\underline{0.0001}   \\
     Fried   & 0.2207 & \textbf{0.0031}    &   \textbf{0.0834}   &  0.0051  \\
       Delta ailerons &  0.0864  &  <\underline{0.0001}   &    \textbf{0.0453}    &  <\underline{0.0001}  \\
       PD motor  & 0.2620 &  \textbf{0.0020}     &  \textbf{0.2210}    &  0.0037   \\
       PD total   & 0.2548 &  \textbf{0.0007}     &   \textbf{0.2136}    &  0.0023   \\
       Puma & 0.2856&  \underline{0.0012}     &  \textbf{0.1808}    &   \textbf{0.0012}    \\
       Delta ele  & 0.1454&  <\underline{0.0001}      &  \textbf{0.1174}    &  <\underline{0.0001}    \\
       Abalone   & 0.1363&  \textbf{0.0007}     &  \textbf{0.0828}    &  0.0017 \\
        Wine   & 0.1750  &  \textbf{0.0006}     &  \textbf{0.1250}    &  0.0031 \\
        California house  & 0.2496  &  \textbf{0.0027}     & \textbf{0.1420}    &  0.0078 \\
        \bottomrule
\end{tabular}
\end{table}

\begin{table*}[!htb]
 \centering
\caption{Performance comparison (mean-mean testing RMSE; time-training time)}
\begin{tabular}{ccccccc}
        \toprule
       Datasets & \multicolumn{2}{c}{Eplison-SVR}  & \multicolumn{2}{c}{BP}   &\multicolumn{2}{c}{the proposed method} (1 nodes)\\
                   \cmidrule{2-3}  \cmidrule{4-5}  \cmidrule{6-7}
      & Mean & time(s) & Mean & time(s) & Mean & time(s)\\
        \midrule
        House 8L &   \underline{0.0799}   &   53.6531      & \underline{0.0790}   &27.8462   & 0.0819   &\textbf{0.0020}  \\
        Auto MPG  &  \underline{0.0985} & 0.0234  &  \underline{0.0953}   & 1.6034 & \underline{0.0996}  & <\textbf{0.0001}   \\
        Machine CPU  &  0.0727 & 0.0187  & 0.0843   &0.7129 & \textbf{0.0489} & <\textbf{0.0001}   \\
        Fried  &  0.0829 & 197.9534  & \textbf{0.0591}   &81.8774 & 0.0834 & \textbf{0.0051}   \\
        Delta ailerons  &  \textbf{0.0402} & 6.8718  & 0.0415   &12.6735 & 0.0453 & <\textbf{0.0001}   \\
        California house &  0.1529 & 35.2250  & \underline{0.1435}   &54.3081 & 0.1420 & \textbf{0.0078}   \\
        PD total &  0.2082 & 7.2540  & 0.2120   &12.6438& 0.2136 & \textbf{0.0023}  \\
        \bottomrule
\end{tabular}
\end{table*}

\begin{table*}[!htb]
 \centering
\caption{Performance comparison (mean-mean testing RMSE; time-training time)}
\begin{tabular}{cccccccc}
        \toprule
       Datasets & \multicolumn{2}{c}{SVM}  & \multicolumn{3}{c}{ELM}   &\multicolumn{2}{c}{the proposed method} (1 nodes)\\
                   \cmidrule{2-3}  \cmidrule{4-6}  \cmidrule{7-8}
      & Mean & time(s) & Mean & time(s) & \#node& Mean & time(s) \\
        \midrule
        Covtype.binary &  74.84\%     &   413.5275       & \underline{77.27\%}   &36.5947  &500 & \underline{76.55\%}   & \textbf{1.2043}  \\
        Mushrooms  &  86.90\%  & 38.6247   &  46.97\%   & 0.9126 &500 & \textbf{88.84\%}  & \textbf{0.0047}   \\
        Gisette  &  77.68\%  &  309.3968  & 88.69\%   &\textbf{6.4093} &500& \textbf{94.10\%} & 48.2027   \\
        Leukemia  &  82.58\%  & \textbf{2.3914}   & 76.47\%   &9.0340 &5000 & \textbf{85.29\%} & 20.9915   \\
        W3a  & \underline{97.18\%}   & 4.5552   & \underline{97.25\%}   & 0.9095 &500& \underline{98.17\%} & \textbf{0.1872}   \\
        Duke  &  86.36\%  & \textbf{0.0156}   & 79.27\%   & \textbf{7.8437} &5000& \textbf{92.67\%} & 20.0352   \\
        Connect-4  & \underline{66.01\%}   & 569.6221   & \underline{76.55\%}   &7.3757 &500& \underline{75.40\%} & \textbf{0.7597}   \\
        Mnist  & 70.85\% &  478.4707  & \textbf{91.60\%}   &8.1651 &500  &84.20\% & 8.8858   \\
        DNA  & \underline{93.70\%}   &  0.4680  & 84.94\%   &0.2122 &500& \underline{92.41\%} & \textbf{0.0187}   \\
        \bottomrule
\end{tabular}
\end{table*}

\begin{table*}[!htb]
 \centering
\caption{Performance comparison (mean-mean testing RMSE; time-training time)}
\begin{tabular}{cccccccc}
        \toprule
       Datasets & \multicolumn{2}{c}{SVM}  & \multicolumn{3}{c}{ELM}   &\multicolumn{2}{c}{the proposed method} (1 nodes)\\
                   \cmidrule{2-3}  \cmidrule{4-6}  \cmidrule{7-8}
      & Mean & time(s) & Mean & time(s) & \#node& Mean & time(s) \\
        \midrule
        A9a &  77.39\%     &   295.0603       & \underline{85.10\%}   &4.5871  &500 & \underline{85.57\%}   & \textbf{0.5714}  \\
        Colon  & 76.67\%   & 10.0156   &  80.67\%   & 11.6283  &5000 & \textbf{85.06\%}  & \textbf{0.9719}   \\
        USPS  &  \underline{94.65\%}  &  146.4942  & \underline{93.54\%}   &2.0639 &500& 88.86\% & \textbf{0.4898}   \\
        Sonar  & \textbf{86.29\%}  & 0.0172   & 80.86\%   &0.0686 &500 & 75.69\% & <\textbf{0.0001}   \\
        Hill Valley&  58.67\%  & 0.1295   & 64.31   & 0.1647 &500& \textbf{67.61\%} & \textbf{0.0047}  \\
        Protein  & 51.18\%   & 253.5796   & 67.09\%   & 5.0919 &500& \textbf{68.76\%} & \textbf{1.9953}   \\
        \bottomrule
\end{tabular}
\end{table*}

\subsection{Real-world regression problems}

The experimental results between proposed the proposed method and some other incremental ELMs (B-ELM, I-ELM, and EI-ELM) are given in Table IV-Table V. In these tables, the close results obtained by different algorithms are underlined and the apparent better results are shown in boldface. All the incremental ELMs (I-ELM, B-ELM, EI-ELM) increase the hidden nodes one by one till nodes-numbers equal to 200, while for fixed ELMs (ELM, EM-ELM), 200-hidden-nodes are used. It can be seen that the proposed method can always achieve similar performance as other ELMs with much higher learning speed. In Table IV, for Machine CPU problem, the the proposed method runs 1900 times, 3400 times, 12000 times faster than the I-ELM, B-ELM and EI-ELM, respectively. For Abalone problem, the proposed method runs 200 times, 700 times, 1600 times faster than I-ELM, B-ELM and EI-ELM, respectively. In Table V, for Wine problem, the the proposed method runs 120 times and 60 times faster than EM-ELM and ELM, respectively. and the testing RMSE of EI-ELM is 2 times larger than the testing RMSE of B-ELM. The B-ELM runs 1.5 times faster than the I-ELM and the testing RMSE for the obained I-ELM is 5 times larger than the testing RMSE for B-ELM.

If only 1-hidden-node being used, those ELM methods such as I-ELM, ELM, EM-ELM and B-ELM can be considered as the same learning method (ELM[13]). Thus in Table VI, we carried out performance comparisons between the proposed method and 1-hidden-node ELM. As observed from Table VI, the average testing RMSE obtained by the proposed method are much better than the ELM. For California house and Delta ailerons problem, the testing RMSE obtained by ELM runs 2 times larger than that of the proposed method. In real applications, SLFNs with only 1 hidden nodes is extremely small network structure, meaning that after trained this small size network may response to new external unknown stimuli much faster and much more accurate than other ELM algorithms in real deployment.

\subsection{Real-world classification problems}

In order to indicate the advantage of the B-ELM on classification performance, the testing accuracy between the proposed the proposed method and other algorithms has also been conducted. Table VIII and IX display the performance comparison of SVM, ELM and the proposed method. In these tables, the close results obtained by different algorithms are underlined and the apparent better results are shown in boldface. As seen from those simulation results given in these tables, the proposed method can always achieve comparable performance as SVM and ELM with much faster learning speed. Take Covtype.binary (large number of training samples with medium input dimensions) and Gisette (medium number of training samples with high input dimensions).

1) For Covtype.binary data set, the proposed method runs 1403 times and 35 times faster than ELM and SVM, respectively.

2) For Gisette data set, the proposed method runs 341 times and 1.7 times faster than ELM and SVM, respectively.

Huang \emph{et al}.\cite{1Huang+Chen+Siew2003}\cite{Feng-Huang2009}\cite{4Huang2005}\cite{Huang2012} have systematically investigated the performance of ELM, SVM/SVR and BP for most data sets tested in this work. It is found that ELM obtain similar generalization performance as SVM/SVR but in much simpler and faster way. Similar to those above works, our testing results(cf. Table VII-IX) shows that the proposed the proposed method always provide comparable performance as SVM/SVR and BP with much faster learning speed.

On the other hand, the proposed method requires none human intervention than SVM, BP and other ELM methods. Different from SVM which is sensitive to the combinations of parameters ($C,\gamma$), or from other ELM methods in which parameter $C$ needs to be specified by users, the proposed method have none specified parameter and is ease of use in the respective implementations.

\section{Conclusion}

Unlike other SLFN learning methods, in our new approach, one may simply calculate the hidden node parameter once and the output weight is not need at all. And it has been rigorously proved that the proposed method can greatly enhance the learning effectiveness, reduce the computation cost, and eventually further increase the learning speed. The simulation results on sigmoid type of hidden nodes show that compared to other learning methods including SVM/SVR, BP and ELMs, the new approach can significantly reduce the NN training time several to thousands of times and can applied in regression and classification problems. Thus this method can be used efficiently in many applications.

However, we find an interesting phenomenon which we are not able to prove in this method, which should be worth pointing out. Experimental results show that this proposed learning method with one hidden node can achieve better generalization performance than the same method with $L,L>1$ hidden nodes. This phenomenon of this proposed method bring about many advantages, but if researchers can find the nature of this phenomenon, it can have far reaching consequences on the generalization ability of neural network.

\bibliography{reference}

\end{document}